\begin{document}
\pagestyle{headings}
\mainmatter
\def\ECCVSubNumber{2086}  

\title{Semantic Change Pattern Analysis} 


\titlerunning{Semantic Change Pattern Analysis}
%
\author{Wensheng Cheng\inst{1} \and
Yan Zhang\inst{1} \and Xu Lei\inst{1} \and Wen Yang\inst{1} \and
Guisong Xia\inst{2}}
\authorrunning{W. Cheng et al.}
%
\institute{Electronic Information School, Wuhan University, Wuhan 430072, China \and
School of Computer Science, Wuhan University, Wuhan 430072, China
\\\email{\{cwsinwhu,zhangyanchn,leixuchn,yangwen,guisong.xia\}@whu.edu.cn}}
\maketitle

\begin{abstract}
Change detection is an important problem in vision field, especially for aerial images. However, most works focus on traditional change detection, i.e., where changes happen, without considering the change type information, i.e., what changes happen. Although a few works have tried to apply semantic information to traditional change detection, they either only give the label of emerging objects without taking the change type into consideration, or set some kinds of change subjectively without specifying semantic information. To make use of semantic information and analyze change types comprehensively, we propose a new task called semantic change pattern analysis for aerial images. Given a pair of co-registered aerial images, the task requires a result including both where and what changes happen. We then describe the metric adopted for the task, which is clean and interpretable. We further provide the first well-annotated aerial image dataset for this task. Extensive baseline experiments are conducted as reference for following works. The aim of this work is to explore high-level information based on change detection and facilitate the development of this field with the publicly available dataset.  

\keywords{change detection; semantic segmentation; aerial images}
\end{abstract}
\section{Introduction}
Change detection is a long-standing research problem, with various applications in general vision field \cite{goyette2012changedetection,bilodeau2013change,wang2014cdnet,feng2015fine,park2019robust} and aerial image scope \cite{gressin2013semantic,taneja2013city,daudt2019multitask,revaud2019did}. As a popular application domain, change detection based on aerial images could be used to analyze the changes of the land surface during a period. It becomes the basis of automatic aerial image analysis. The general definition for change detection is to identify the areas where changes happen by jointly analyzing two registered images \cite{bruzzone2012novel}. The output is a binary image, in which ones denote pixels where changes happen and zeros represent pixels that remain unchanged. Hence it can be regarded as a binary dense classification problem. Fig. \ref{fig:topic}(a) and Fig. \ref{fig:topic}(b) are a pair of co-registered aerial images. Fig. \ref{fig:topic}(c) is the result of traditional binary change detection task for the pair of images.

\begin{figure}[!t]
	\begin{minipage}[b]{.49\linewidth}
		\centering
		\centerline{\includegraphics[width=\columnwidth]{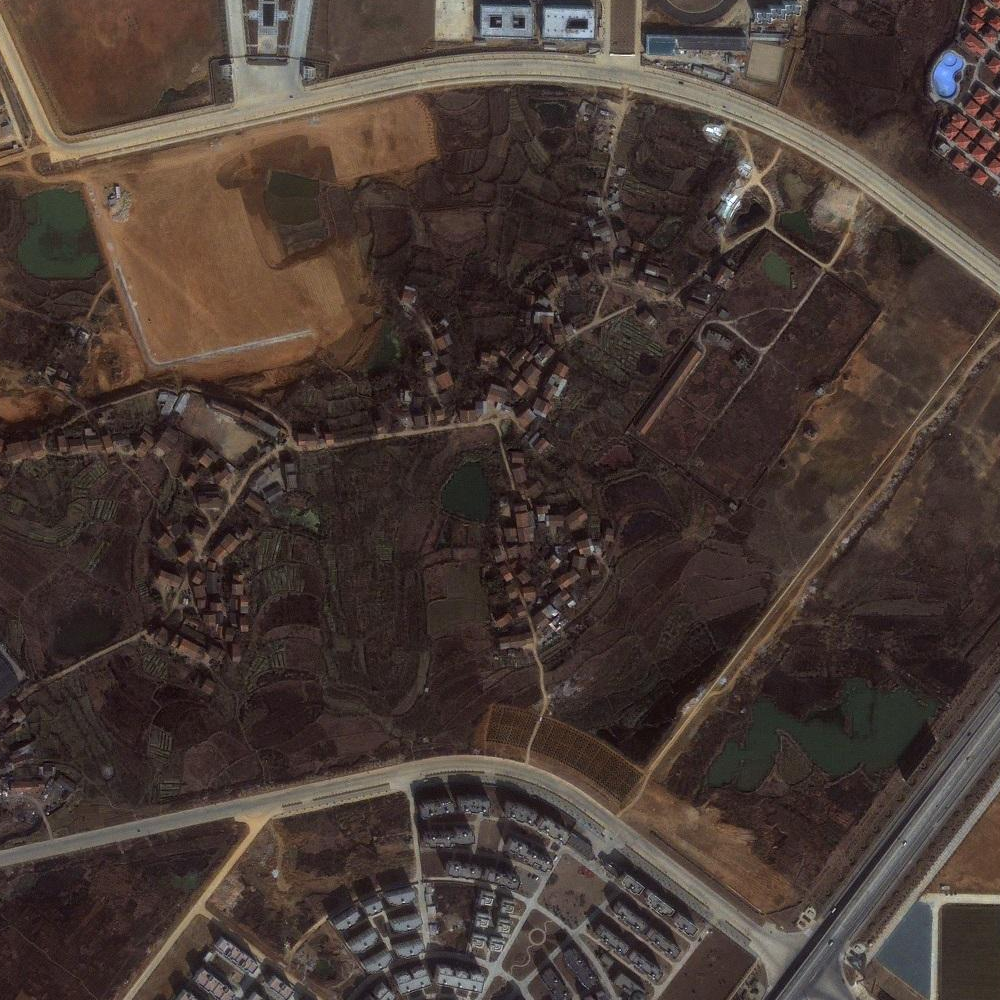}}
		\centerline{(a) Source image}
	\end{minipage}
	\hfill
	\begin{minipage}[b]{.49\linewidth}
		\centering
		\centerline{\includegraphics[width=\columnwidth]{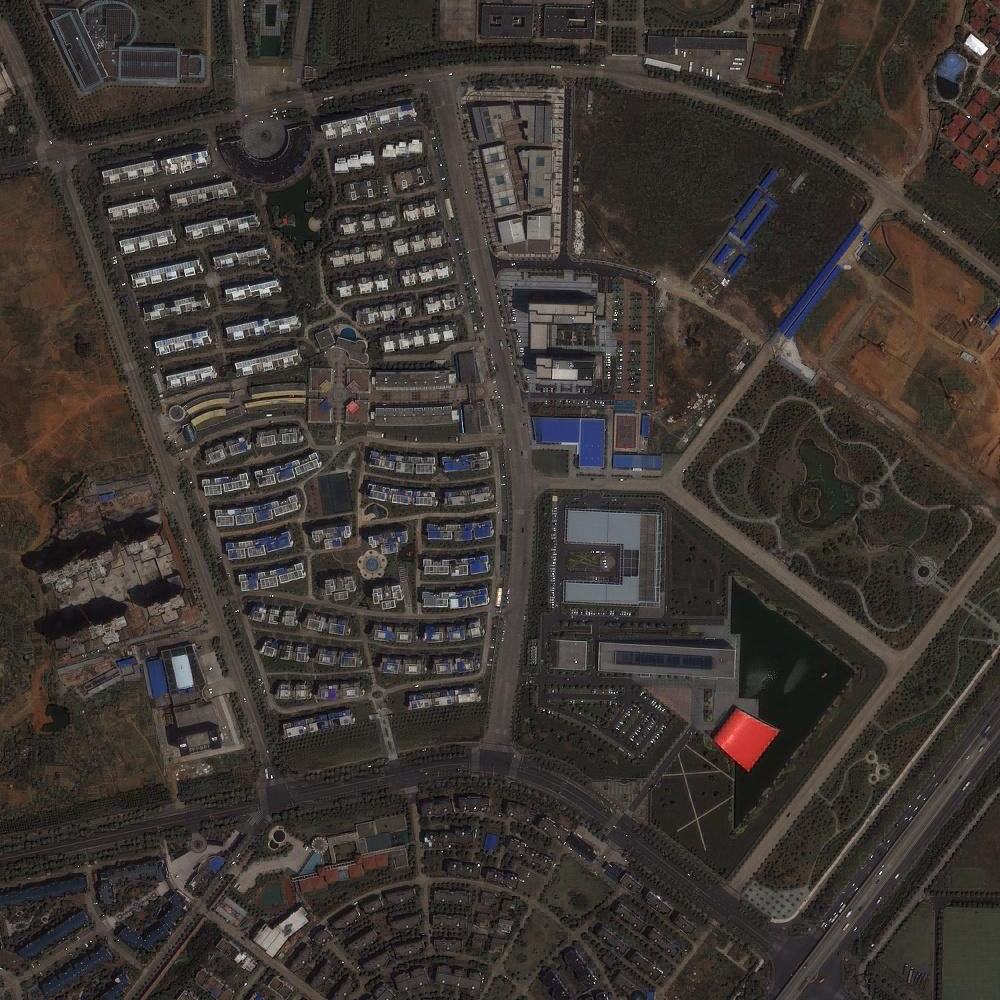}}
		\centerline{(b) Destination image}
	\end{minipage}
	\begin{minipage}[b]{.49\linewidth}
		\centering
		\centerline{\includegraphics[width=\columnwidth]{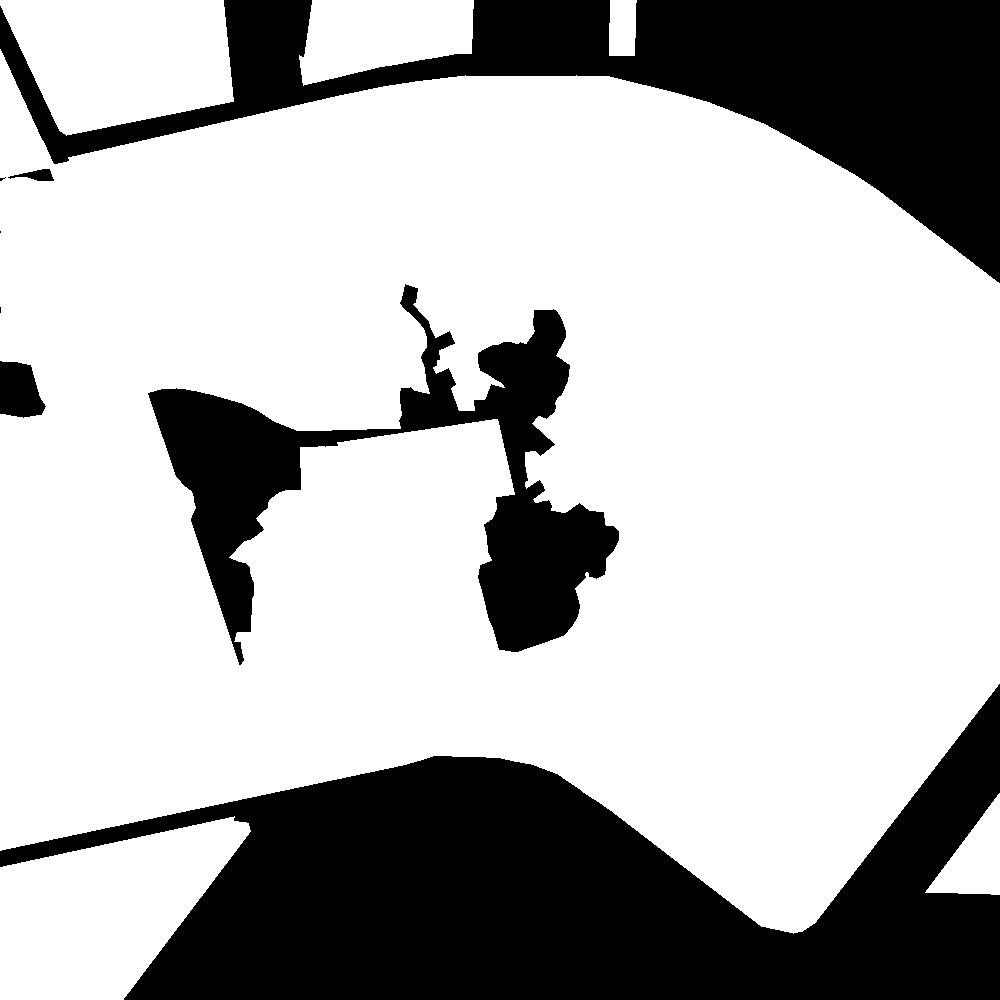}}
		\centerline{(c) Traditional change detection result}
	\end{minipage}
	\hfill
	\begin{minipage}[b]{.49\linewidth}
		\centering
		\centerline{\includegraphics[width=\columnwidth]{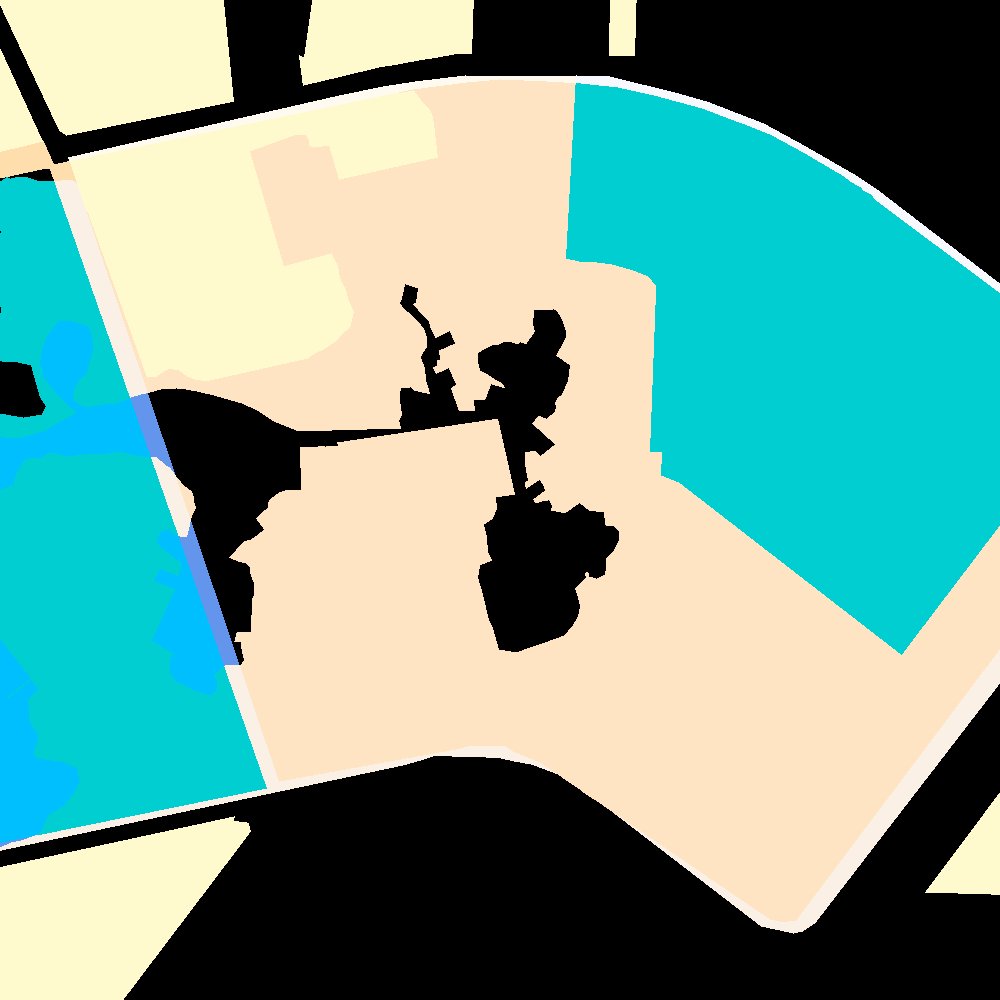}}
		\centerline{(d) Semantic change pattern analysis result}
	\end{minipage}
	\begin{minipage}[b]{\linewidth}
		\centering
		\centerline{\includegraphics[width=.99\columnwidth]{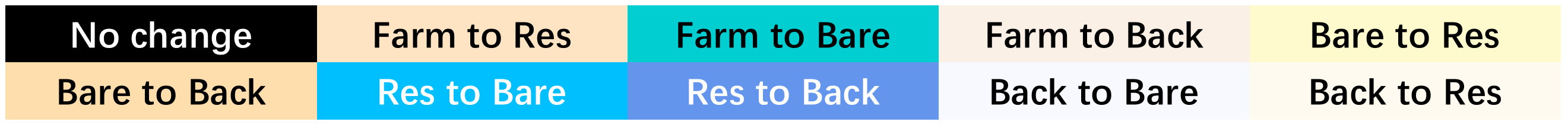}}
	\end{minipage}
	\caption{(a) is the source image before change. (b) is the registered destination image after change. (c) is the traditional change detetion task result. Black pixels are areas without change, and white pixels are areas where changes happen. (d) is proposed semantic change pattern analysis task result. Pixels in distinct colors represent various change types. In the color bar, Farm: farmland, Res: residential area, Bare: bare land, Back: background.}
	\label{fig:topic}
\end{figure}

However, there exist some drawbacks for the task. To begin with, traditional change detection methods only give the information about whether a region has changed, without the information about what kind of change it is. As a result, it can not fulfill the requirement for understanding the change type, which limits its application to a large extend. Besides, the definition of \emph{change} is really casual and vague. For instance, the appearance of grassland would change from summer to winter. Under this situation, whether we should label these areas as changed area is not clear at all. This would influence the following processing severely.

To deal with problems above, an efficient solution is to introduce semantic information. To our best knowledge, \cite{kataoka2016semantic} first proposed the concept named \emph{semantic change detection} for street images. It consists of two steps. The first step is the same with traditional change detection, which outputs a binary image showing changed areas. The second step is to simply label the newly added objects. Therefore, it can not decide the type of change, due to the lack of semantic information in the image before change. Following this definition, \cite{varghese2018changenet} proposed a new deep neural network model which performs well on public street view datasets. \cite{daudt2019multitask} tried to deal with semantic segmentation and change detection problems simultaneously through multitask learning. Although this work involves three kinds of change, city expansion, soil change, and water change, it does not include the information about what category the area changed from and which category the area changed to.

In our work, we propose a new task, semantic change pattern analysis(SCPA), to utilize semantic information and analyze change types comprehensively, especially for aerial images. Specifically, for a pair of co-registered images, the output of the task would be a pixel-level multi-class classification result. Let's refer to the image before change as \emph{source image}, and the image after change as \emph{destination image}. Each pixel of the output image would be assigned a class label, which denotes the change type for the corresponding pair of pixels. The \emph{change type} here is jointly defined by the pixel's semantic label in the source image and destination image, i.e., the pixel changes from \emph{what} to \emph{what}. For instance, label `1' denotes that the pixel has changed from bare land in source image to building in destination image. Note that if the source image and destination image are switched, i.e., the pixel changed from building to bare land, the change type is supposed to be different and another label other than `1' should be assigned. Fig. \ref{fig:topic}(d) shows the result of semantic change pattern analysis task.

To evaluate the performance for SCPA task, we need a simple and interpretable metric. Considering the fact that SCPA could be regarded as a multi-class pixel-level classification problem, where each class is a type of change, we adopt the mean Intersection over Union(IoU) over all classes as the main metric. This metric is mature and easy to calculate. However, it dosen't show the ability of method to determine whether an area has changed directly. So we use accuracy as an auxiliary metric to present that ability.

For the new task, another important aspect is dataset. The necessary data is semantic information label. Most existing datasets are for traditional binary change detection, which are not suitable for SCPA task. The most related one is the dataset proposed in \cite{daudt2019multitask}. Yet the images and labels of the dataset are acquired in different time from distinct sources, so the image and the label do not match well and there exist obvious errors in the label. That would severely influence model performance for the pixel-level task.

Therefore, we constructed a dataset via manually labeling pixel by pixel for proposed SCPA task. The dataset is based on a pair of large co-registered aerial images of Wuhan City, China, which is named as SCPA-Wuhan City(SCPA-WC) dataset. SCPA-WC is the first well-annotated aerial image dataset containing semantic information label for this task. We believe this publicly available dataset would facilitate the development of the newly proposed task.

On the whole, our contributions are summarized below:
\begin{itemize}
\item We propose a new task, semantic change pattern analysis, mainly for aerial images, and give the metric for the task, which is clean and interpretable. This is a higher-level task than traditional binary change detection or semantic change detection. Given a pair of images, it can not only decide where changes happen, but also determine the types of change. It would eliminate the ambiguity of change in traditional change detection and provide more useful and richer information for following automatic image analysis. 

\item We construct a dataset, SCPA-Wuhan City dataset, for the task. It contains a pair of large co-registered aerial image of Wuhan City, China, which is labeled manually pixel by pixel with semantic information. This is the first well-annotated aerial image dataset containing semantic information label for this task. We believe the publicly available dataset would make it convenient for people to try their new ideas in this field. 

\item We have conducted extensive baseline experiments on the dataset for the task. We have demonstrated the ability of current methods for the proposed task. These results would become the basis for future work, and encourage people to design more targeted methods for the task.  We hope it could facilitate the development of the task, and draw more attention from the community.
\end{itemize}

\section{Related Work}
\subsection{Binary Change Detection}
Binary change detection in this paper refers to the traditional change detection task. The task aims to identify changed areas for a pair of co-registered images. This is a fundamental topic for automatic image analysis, especially for aerial images, since it could provide information of land surface that has experienced changes. Many people have conducted research in this field.

Methods for binary change detection usually consist of two processes. The first one is to calculate a difference map between corresponding pixels, and the second one is to separate these pixels into ``change" or ``no change" based on a threshold \cite{daudt2019multitask}. These works either focus on the image differencing method \cite{bovolo2005wavelet,el2016convolutional,el2017zoom}, or put effort on decision function \cite{bruzzone2000automatic,celik2009unsupervised}. More recent works \cite{zhan2017change,chen2018mfcnet} take use of convolutional neural networks to perform binary change detection. This task is only able to determine whether a region has changes, without telling the type of change.
\subsection{Semantic Change Detection}
The concept of semantic change detection was first proposed in \cite{kataoka2016semantic}. \cite{alcantarilla2018street} proposed a network called CDnet to find structural changes in street view video. \cite{varghese2018changenet} presented a network named ChangeNet based on parallel deep convolutional neural network architecture for the task. This task is based on traditional change detection and mainly targets street view situation. The process contains two steps. The first step is to decide whether an area has changed, which is the same with binary change detection. The additional step is to give newly added objects a class label. 

To be specific, if a car appears in the destination image, the car area would be labeled as changed, and another class label of car would be assigned to the area. In other words, it only focuses on what the newly added object is, without considering the change type, i.e., the area changed from \emph{what} to a car. Although a later work \cite{daudt2019multitask} involves three kinds of change, city expansion, soil change, and water change, it focuses on objects and doesn't conform to the change type definition here. Because the semantic labels of the changed areas in source image and destination image are not specified at all. Take soil change for instance, it can not answer what an area used to be before it changed to soil, or what the soil changed to in the destination image.

\subsection{Change Detection Datasets}
Various datasets have been constructed in the field of change detection. For binary change detection, \cite{benedek2009change} constructed a binary change detection dataset named air change dataset with several aerial image pairs. \cite{daudt2018urban} presented a dataset called ONERA satellite change detection dataset, which is composed of some multispectral satellite image pairs. \cite{bourdis2011constrained} showed a dataset named aerial imagery change detection dataset. The dataset consists of synthetic aerial iamges with artificial changes generated by the rendering engine.

For semantic change detection, \cite{kataoka2016semantic} transformed the TSUNAMI dataset \cite{sakurada2015change} for the task via adding semantic label to the destination image. \cite{alcantarilla2018street} proposed the VL-CMU-CD dataset, which uses simultaneous localization and mapping technique to get nearly registered images. \cite{daudt2019multitask} built a dataset named HRSCD, whose images and labels come from different sources. The only guarantee is that the images and labels are acquired in the same year, which means the time difference between the images and labels could be as large as one full year. That would severely influence model's performance on the pixel-level task and make the result unreliable if we adopt the dataset. Therefore, a well-annotated dataset containing registered image pairs and corresponding semantic label is needed for SCPA task.

\section{Semantic Change Pattern Analysis Task}
\subsection{Task Definition}
\textbf{Task format.} The format for semantic change pattern analysis task is intuitive. If there are $L$ land classes encoded by $\mathcal{L}:=\{0,\dots,L-1\}$ in source image and destination image, there would be $N=L^2-L+1$ change types at most, where the last $1$ denotes the type of no change. The $N$ change types are encoded by $\mathcal{N}:=\{0,\dots,N-1\}$. For a pair of pixels $(x_i,y_i)$ belonging to the source image $X$ and registered destination image $Y$ respectively, the \emph{semantic change pattern analysis} task requires to map the pair of pixels to a change type $n_i\in\mathcal{N}$. The change type $n_i$ is jointly defined by $x_i$ and $y_i$, so difference in $x_i$ or $y_i$ would both result in a different change type $n_i$. We present a brief proof below. 
\newtheorem{suan}{\textbf Theorem}
\begin{suan}
	Given $L$ land classes in source image and destination image, the maximal number of corresponding change type is $N=L^2-L+1$.
\end{suan}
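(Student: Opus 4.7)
The plan is to argue by a simple counting argument over ordered pairs of class labels from $\mathcal{L}$, treating each change type as an equivalence class of such pairs. First, I would note that since each pixel in the source image carries a label in $\mathcal{L}$ and likewise for the destination image, the number of possible ordered label pairs $(x_i,y_i)$ is $|\mathcal{L}|\cdot|\mathcal{L}|=L^2$. This gives the crude upper bound on the number of change types before any identifications are made.

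Next, I would split these $L^2$ pairs into the ``diagonal'' pairs $(c,c)$ for $c\in\mathcal{L}$ and the ``off-diagonal'' pairs with $x_i\neq y_i$. There are exactly $L$ diagonal pairs, and by the task convention each one corresponds to a pixel whose semantic label has not changed; all $L$ such pairs are therefore identified with the single ``no change'' type. The remaining $L^2-L$ off-diagonal pairs each encode a genuine change from one class to a different one. Because the task definition explicitly states that the change type is jointly determined by $x_i$ and $y_i$, with any alteration of either coordinate producing a different type, these $L^2-L$ pairs are in the worst case pairwise inequivalent and so each contributes a distinct change type.

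Summing the two contributions yields $N=(L^2-L)+1=L^2-L+1$, which is the claimed maximum. There is no substantive obstacle in the argument; the only point requiring care is the collapsing of the $L$ diagonal pairs into one ``no change'' type rather than $L$ separate types, and I would justify this explicitly by appealing to the stated convention that the ``no change'' class does not record the common source/destination label. With that identification made, the count is immediate.
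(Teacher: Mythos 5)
Your proof is correct and follows essentially the same counting argument as the paper: both start from the $L^2$ ordered label pairs and collapse the $L$ diagonal (no-change) pairs into a single type, differing only in whether the arithmetic is phrased as $L^2-(L-1)$ or $(L^2-L)+1$. No issues.
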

\begin{proof}
	For $L$ land classes in source image and destination image, we first randomly take one pixel $x_i$ from source image $X$, and the corresponding one $y_i$ from destination image $Y$. The possible combination situation would be $L\times L$. Then we define all kinds of no change situation, $\{l_1\to l_1, l_2\to l_2,\dots,l_L\to l_L\}, l_i\in \mathcal{L}$ as a whole one, which is reasonable for our task, so we need to remove $L-1$ from $L\times L$, and get the result $N=L^2-L+1$. 
\end{proof}

\noindent\textbf{Relation to change detection.} Semantic change pattern analysis is a higher-level task than traditional binary change detection and semantic change detection task. Compared to them, SCPA not only requires the information of whether an area has changed, but also the information of what kind of change type the area has experienced. In other words, the solution to SCPA problem would contain the solutions to binary change detection problem and semantic change detection problem. 

\noindent\textbf{Relation to semantic segmentation.} Semantic segmentation problem performs a pixel-level classification task for a single image input. Obviously, SCPA could also be regarded as a multi-class pixel-level classification task. The difference is the pixel label of SCPA is the change type, other than the land category, and it requires a pair of images as input, not single one. Yet the connection is still close. In fact, an intuitive approach to perform SCPA task is based on semantic segmentation method, which we would explain later in \ref{pipe}.

\noindent\textbf{Relation to status transition process.} Status transition process is a kind of random process where system status changes. SCPA task has close relation with the status transition problem, except that SCPA requires both source image and destination image as input, while status transition problem only needs source image as input and outputs the transition probability matrix for each land class.

\subsection{Task Metric}
\label{metric}
\noindent\textbf{Mean IoU.} Since SCPA is a multi-class pixel-level classification problem actually, where each class is a type of change, we adopt the mean Intersection over Union(mIoU) over all classes as the main metric. It makes the metric \emph{insensitive} to class imbalance. This metric is mature and easy to calculate. It's able to assess model's performance on determining each change type comprehensively. Formally, 
\begin{equation}
	{\rm mIoU}=\frac{1}{M}\sum_{i=0}^{M-1}\frac{C_{ii}}{\sum _{j=0}^{M-1}C_{ij}+\sum _{j=0}^{M-1}C_{ji}-C_{ii}}
\end{equation}
where $C_{ij}$ is the number of pixels of change type $i$ predicted to belong to change type $j$, and $M$ is the number of total change types that actually appears.

\noindent\textbf{Binary Accuracy.} Although mean IoU is able to evaluate model's performance on determining each change type, it can not directly present method's ability to decide whether an area has changed, which is also an important aspect for the task. To meet the requirement, we ignore the difference between all kinds of change types, and only care whether the pixel has changed or not. In other words, here SCPA is degraded to a binary change detection problem. Then we take binary accuracy(BAcc) as the auxiliary metric, the widely used one in binary change detection task. Specifically,
\begin{equation}
	{\rm BAcc=\frac{TP+TN}{TP+FP+TN+FN}}
\end{equation}
where TP denotes the number of pixels that method predicted as changed, and the corresponding ground truth is also labeled as changed, no matter what the specific change type is. TN represents the number of pixels that both the method and the ground truth denote as no change.

\begin{figure}[t]
	\centering{\includegraphics[width=0.95\columnwidth]{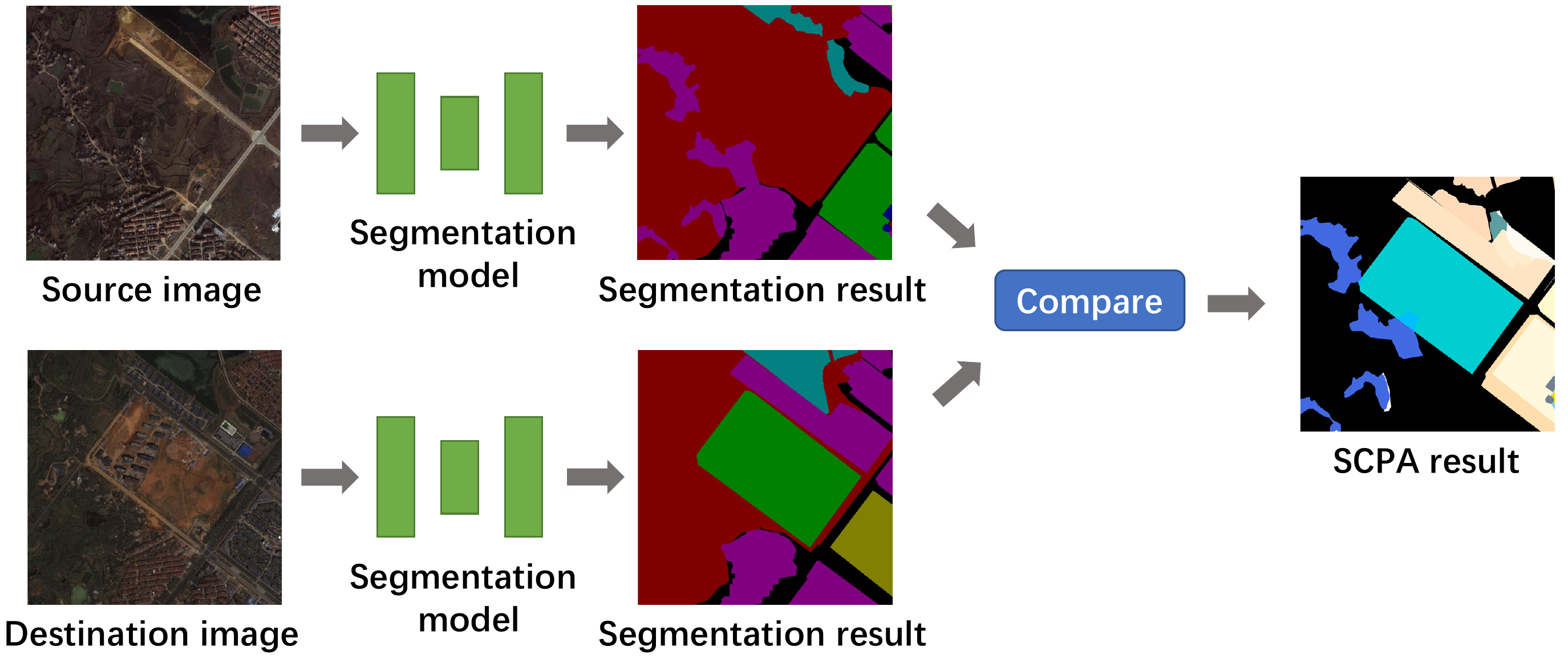}}
	\caption{The two-step method pipeline based on semantic segmentation models. The first step is to get the land class segmentation result for both source image and destination image through semantic segmentation method, the second step is to compare predicted source image label with destination image label, and then we obtain the final SCPA result.}
	\label{fig:pipeline}
\end{figure}

\subsection{Task Pipeline}
\label{pipe}
In this part, we would discuss two pipelines for SCPA task. One is a two-step method, which is based on semantic segmentation method. It enables us to fully take use of existing methods to deal with the SCPA task. The other one is a unified one-step method, which is supposed to target for the SCPA task.

\noindent\textbf{Two-step method.} Since SCPA aims to determine the change type for a pair of images, a natural thought would be a two-step approach. The first step is to get the land class label for both source image and destination image through semantic segmentation method, the second step is to compare predicted source image label with destination image label, and then we obtain the final result. The pipeline is shown in Fig. \ref{fig:pipeline}. It enables us to take use of existing methods to tackle the problem, which establishes a meaningful foundation for future work. We take this approach as a baseline in following experiments.

\begin{figure}[t]
	\centering{\includegraphics[width=0.75\columnwidth]{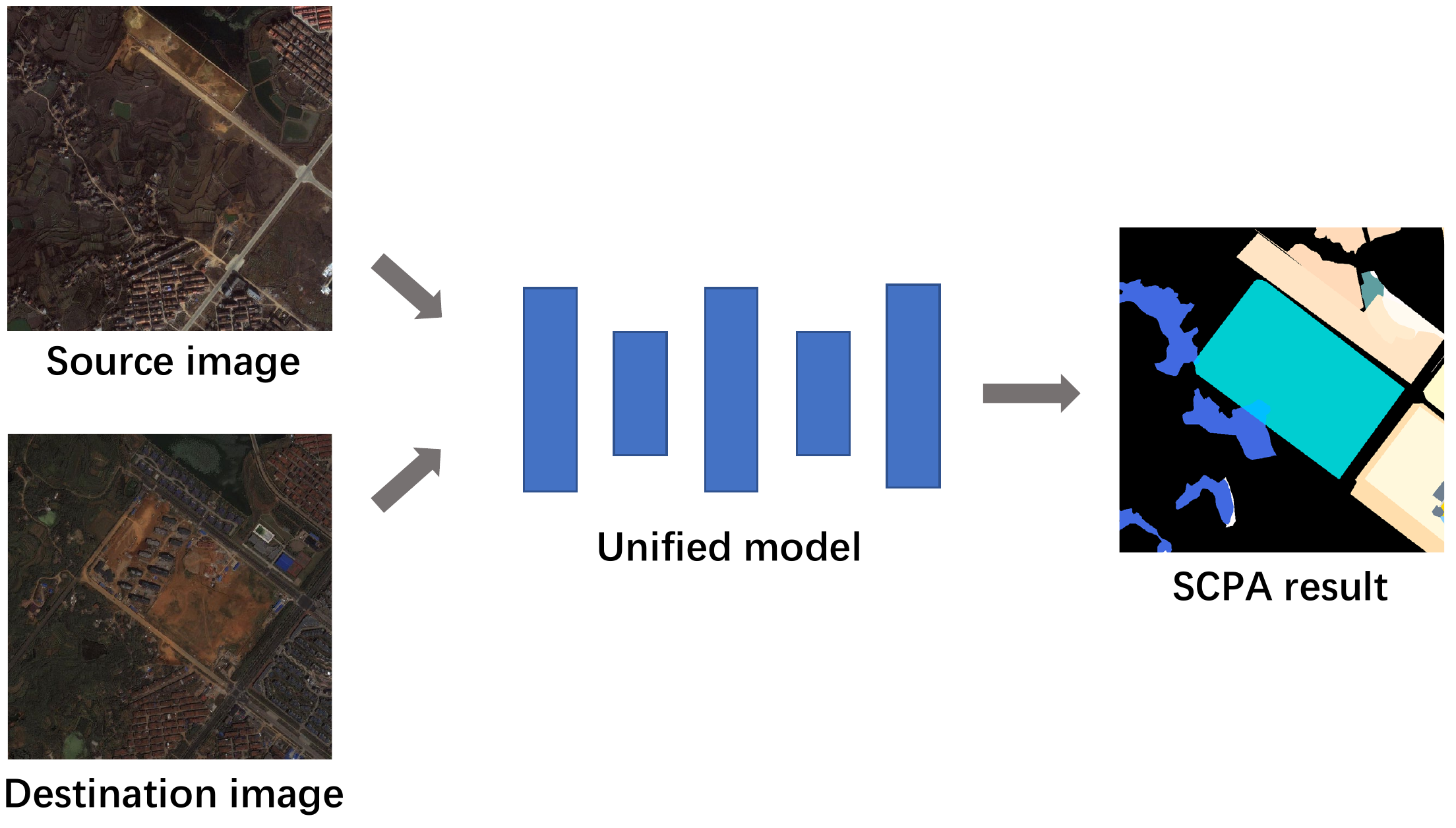}}
	\caption{The ideal one-step method pipeline. For a pair of images, the one-step method could directly give the SCPA result, without taking extra steps.}
	\label{fig:one_step}
\end{figure}

\noindent\textbf{One-step method.} Although two-step method mentioned above could be used to deal with SCPA task, it is not specialized designed for the task. For unchanged pixels, the land class information outputted by two-step method is not necessary for the task, since we only need to know the change type for actually changed pixels. Hence we expect a more unified, one-step method targeting SCPA task in the future. Ideally, given a pair of images, the one-step method could directly give the output of the task, without taking extra steps. Fig. \ref{fig:one_step} presents the pipeline. Some related works \cite{daudt2018fully,chen2018mfcnet} have been proposed in binary and semantic change detection field, yet it still requires more effort to construct a unified, one-step method for the SCPA task.


\section{SCPA-Wuhan City Dataset}
\subsection{Dataset Overview}
To facilitate the development of the proposed task, we construct the first well-annotated dataset containing both semantic label and change situation. The dataset is based on a large pair of aerial images of Wuhan City, China, which is named as SCPA-Wuhan City(SCPA-WC) dataset. The large pair of aerial images is first registered in pixel level to meet the requirement of the task. They were acquired by the IKONOS sensor, and have a spatial resolution of 1 m. The aerial images come from \cite{wu2017kernel}.

\begin{figure}[!t]
	\centering{\includegraphics[width=0.95\columnwidth]{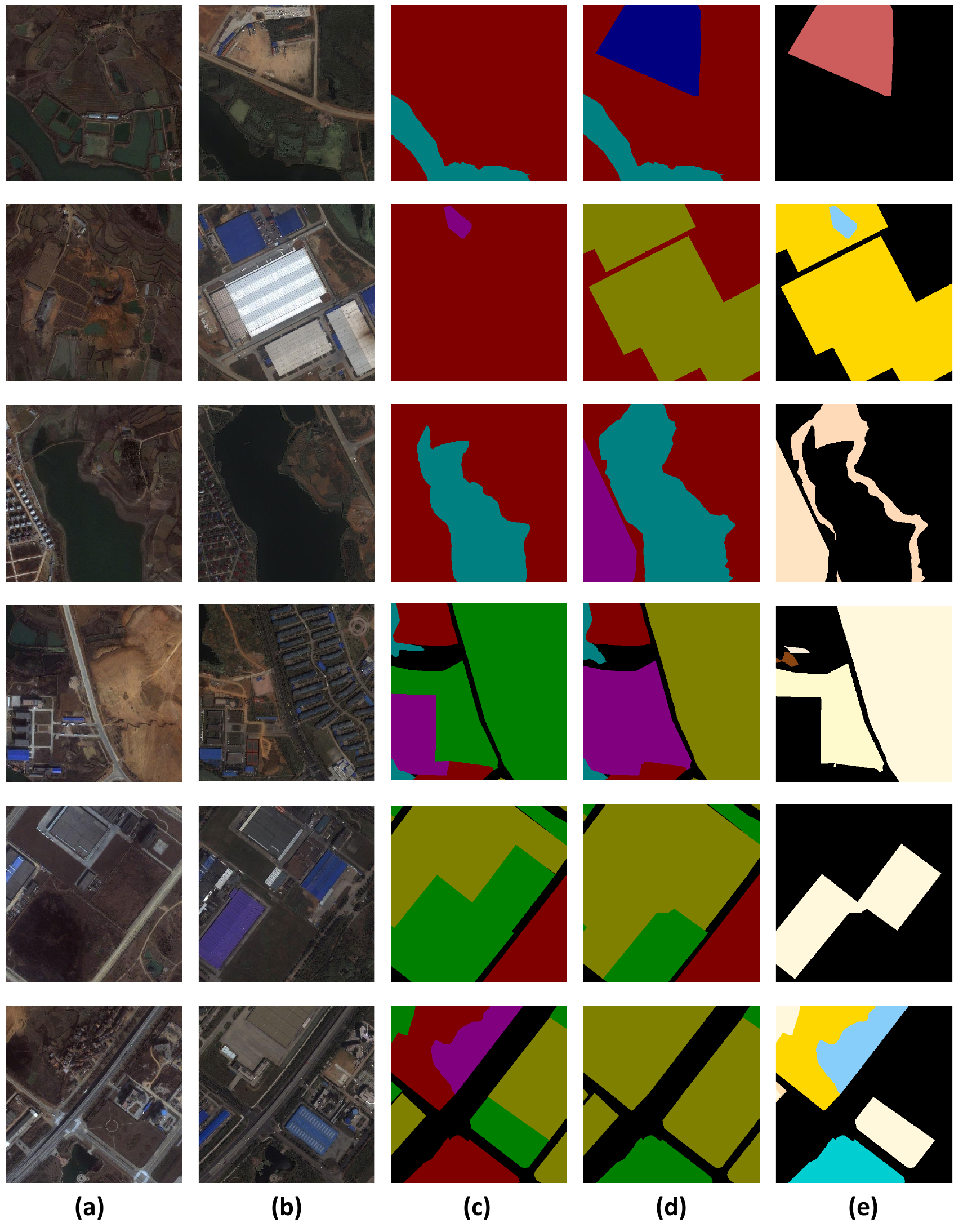}}
	\caption{SCPA-Wuhan City dataset samples. (a) and (b) are source image and destination image respectively. (c) and (d) are their pixel-wise land class labels. Specifically, black: background, red: farmland, green: bare land, yellow: industrial area, blue: parking area, purple: residential area, cyan: water body. (e) is the corresponding change type label. }
	\label{fig:data}
\end{figure}

In practice, it's difficult to directly label the change situation for each pixel, especially including each change type for the task. Therefore, we first label each pixel of source image and destination image respectively with its land class, i.e., give the label of semantic segmentation task, compare the pixel label in the two images, and then get the final label of change type. This way is efficient and accurate. To make sure the labels of unchanged areas keep same, we map the label of source image to destination image and adjust the label for changed areas after labeling the source image first. Besides, since this labeling approach provides extra land class label for unchanged areas, it would benefit the training process of two-step method mentioned before for the task with more training samples.

About data split, we obtain 1,706 image patches(853 pairs of images) with the size of 512$\times$512 by cropping the large pair of images. We then randomly extract 1/2 from these images as training set, 1/6 as validation set and 1/3 as test set.

\begin{figure}[!t]
	\centering{\includegraphics[width=0.95\columnwidth]{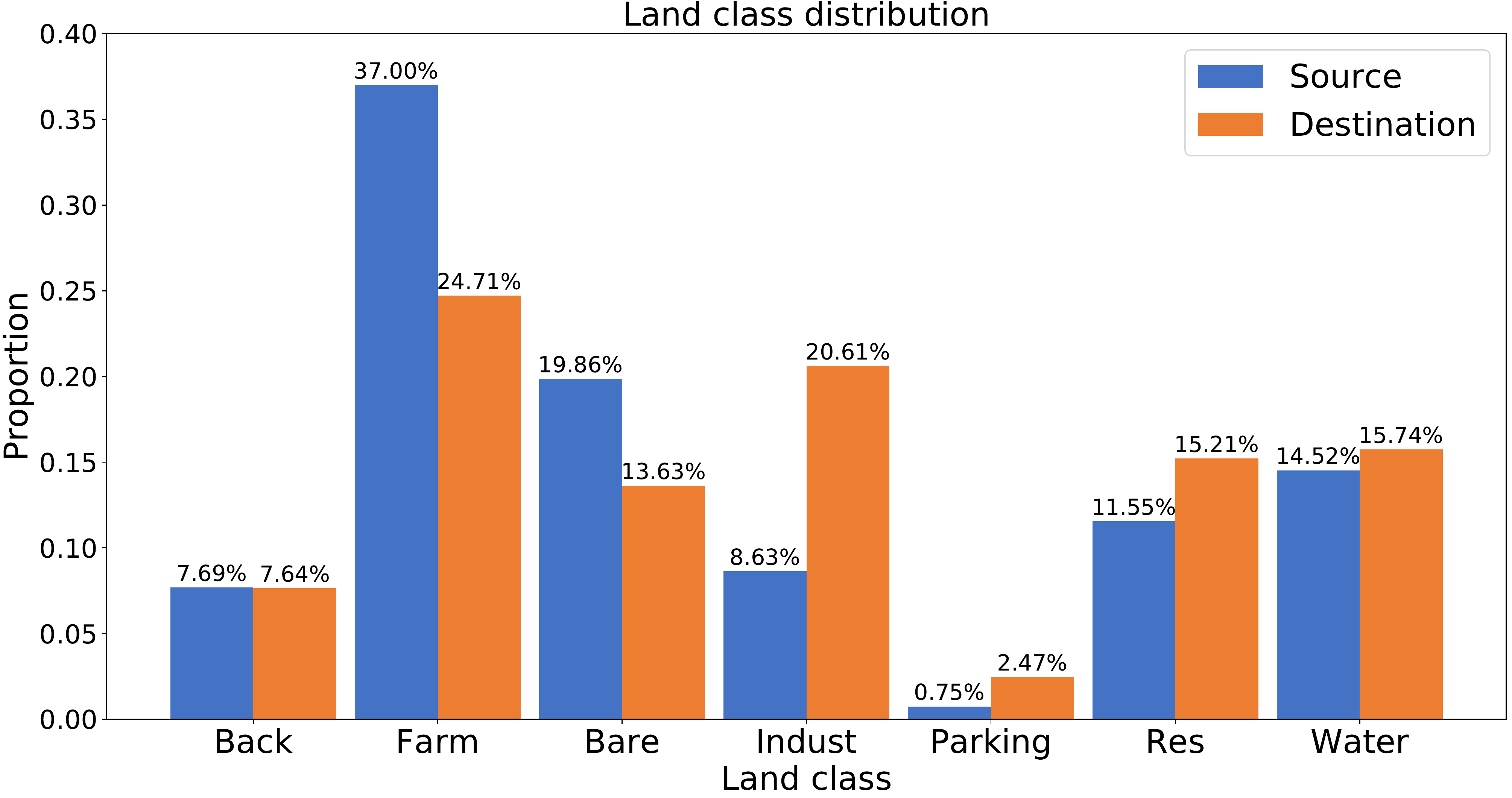}}
	\caption{The land class distribution of SCPA-WC dataset. The blue bar represents source image and the yellow bar denotes destination image. It presents the proportion of various land classes in corresponding image. Back: background, Farm: farmland, Bare: bare land, Indust: industrial area, Parking: parking area, Res: residential area, Water: water body.}
	\label{fig:count}
\end{figure}

\subsection{Dataset Property}
There are 7 land classes in total in this dataset. Specifically, they are background, farmland, bare land, industrial area, parking area, residential area, and water body, which are common categories in urban area. The change that happens among these categories also gets much attention in practice. Fig. \ref{fig:count} presents the detailed distribution of these categories. As we can see, farmland takes up the largest proportion in both source image and destination image. After several years, a large proportion of farmland and bare land has disappeared. On the contrary, industrial area, parking area and residential area have increased a lot.

We further calculate the distribution of all change types, as illustrated in Fig. \ref{fig:matrix}. Obviously, no change is the most common situation for the dataset. This is reasonable for a very large area. Also, it's interesting to look up the symmetric elements along diagonal. For instance, 1,498,180 pixels of farmland in the source image have changed to industrial area in destination image, and 321,957 pixels of bare land have changed to parking area. But the opposite situation didn't happen. This characterizes the development situation of the city, i.e., the city is at a fast development period, not in decaying period. However, current change detection task won't present this information. It effectively demonstrates the significance of proposed SCPA task in practical use.
\begin{figure}[!t]
	\centering{\includegraphics[width=0.95\columnwidth]{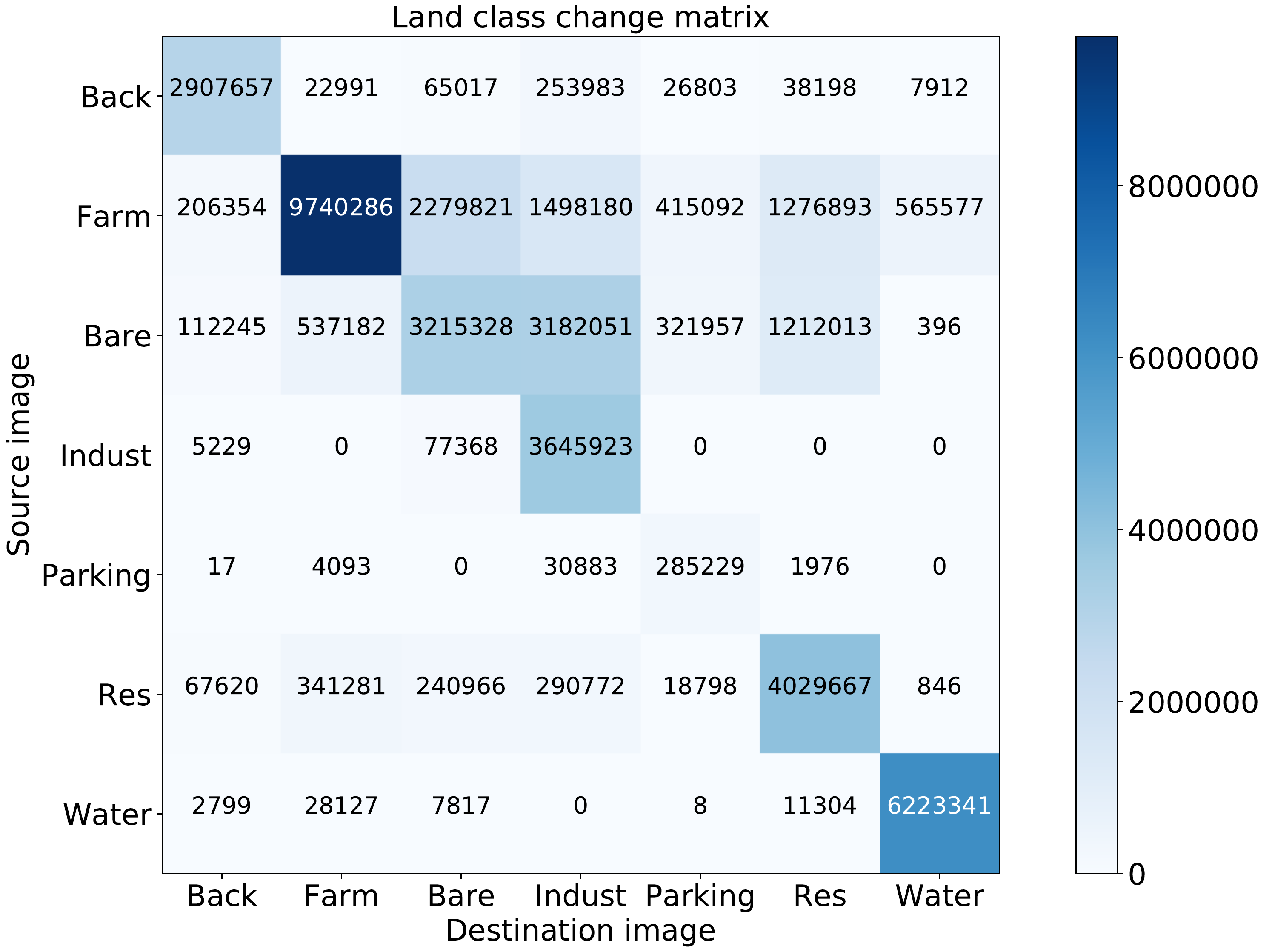}}
	\caption{The land class change situation matrix of SCPA-WC dataset. It presents the number of pixel pairs belonging to different change types. Note that pixel pairs on the diagonal has the same change type, i.e., no change. Back: background, Farm: farmland, Bare: bare land, Indust: industrial area, Parking: parking area, Res: residential area, Water: water body.}
	\label{fig:matrix}
\end{figure}

\section{Experiments}
\subsection{Experimental Settings}
We have conducted extensive baseline experiments on SCPA-WC dataset for the task. To fully take use of labeled data, we use training set and validation set together to train these models, and test set to evaluate their performance.

About training details, all models are trained on 4 Titan XP GPUs with 4$\times$12GB memory totally. Stochastic gradient descent (SGD) with momentum is adopted to train this network. Momentum value is set as 0.9. We apply poly learning rate policy to adjust learning rate, which reduces the learning rate per iteration. This could be expressed as:
\begin{equation}
	LR=initial\_lr\times (1-\frac{iter}{max\_iter})^{power}
\end{equation}
where $LR$ is the current learning rate, $initial\_lr$ is initial learning rate, $iter$ is the current iteration step, and $max\_iter$ is the maximal iteration step. The $initial\_lr$ is set as 0.01 and $power$ is set as 0.9. The $max\_iter$ depends on batch size and the number of epoch. For all models, batch size is 12 and the number of epoch is set to 100 to make sure these networks converge.

\begin{table}[t]
	\caption{Baseline results on SCPA-WC dataset. The middle part shows the class IoU for partial change types. BAcc is the accuracy regarding SCPA as a traditional binary change detection task, and mIoU is the mean IoU over all change types. NoC: no change, (0,1): the pixel changed from land class 0 to land class 1. \{0,1,2,3,4,5,6\} represents background, farmland, bare land, industrial area, parking area, residential area, water body respectively.}
	\begin{center}
		\begin{tabular}{l|cccccccccc|cc}
			\hline
			Methods                    &  NoC  & (0,1) & (0,2) & (0,3) & (0,4) & (0,5) & (0,6) & (1,0) & (1,2) & (1,3) & BAcc  & mIoU \\ \hline\hline
			FCN\cite{fcn}          & 64.74 & 0.00  & 0.81  & 2.51  & 4.47 & 0.00  & 0.00  & 8.14  & 28.17 & 42.49 & 73.73 & 9.59 \\
			FRRN-B\cite{frrnb}        & 61.26 & 0.00  & 1.43  & 2.69  & 2.29  & 0.00  & 0.00  & 7.27  & 25.15 & 32.16 & 70.81 & 8.31 \\
			GCN\cite{gcn}             & 63.81 & 0.18  & 1.14  & 2.35  & 1.95  & 0.00  & 0.00  & 9.20 & 30.62 & 41.01 & 72.84 & 9.41 \\
			RefineNet\cite{refinenet} & 64.11 & 0.00  & 0.10  & 2.38  & 3.20  & 0.00  & 0.00  & 11.74 & 27.12 & 41.97 & 73.25 & 9.85 \\
			Deeplabv3\cite{deeplabv3} & 64.23 & 0.01  & 1.09  & 2.60  & 3.96 & 0.00  & 0.00  & 8.14  & 27.32 & 45.49 & 72.87 & 9.92 \\
			\hline
		\end{tabular}
		\label{tab:rlt}
	\end{center}
\end{table}

\subsection{Baseline Methods}
We adopt two-step method mentioned above as the baseline in our experiments. As we explained before, it consists of two stages. The first stage is to get the land class label for both source image and destination image, which actually equals to semantic segmentation task. The other stage is to compare the land class label and get the corresponding change type. To make a comprehensive experiment, we have benchmarked many typical semantic segmentation methods on the SCPA-WC dataset.

\begin{figure}[!t]
	\centering{\includegraphics[width=0.95\columnwidth]{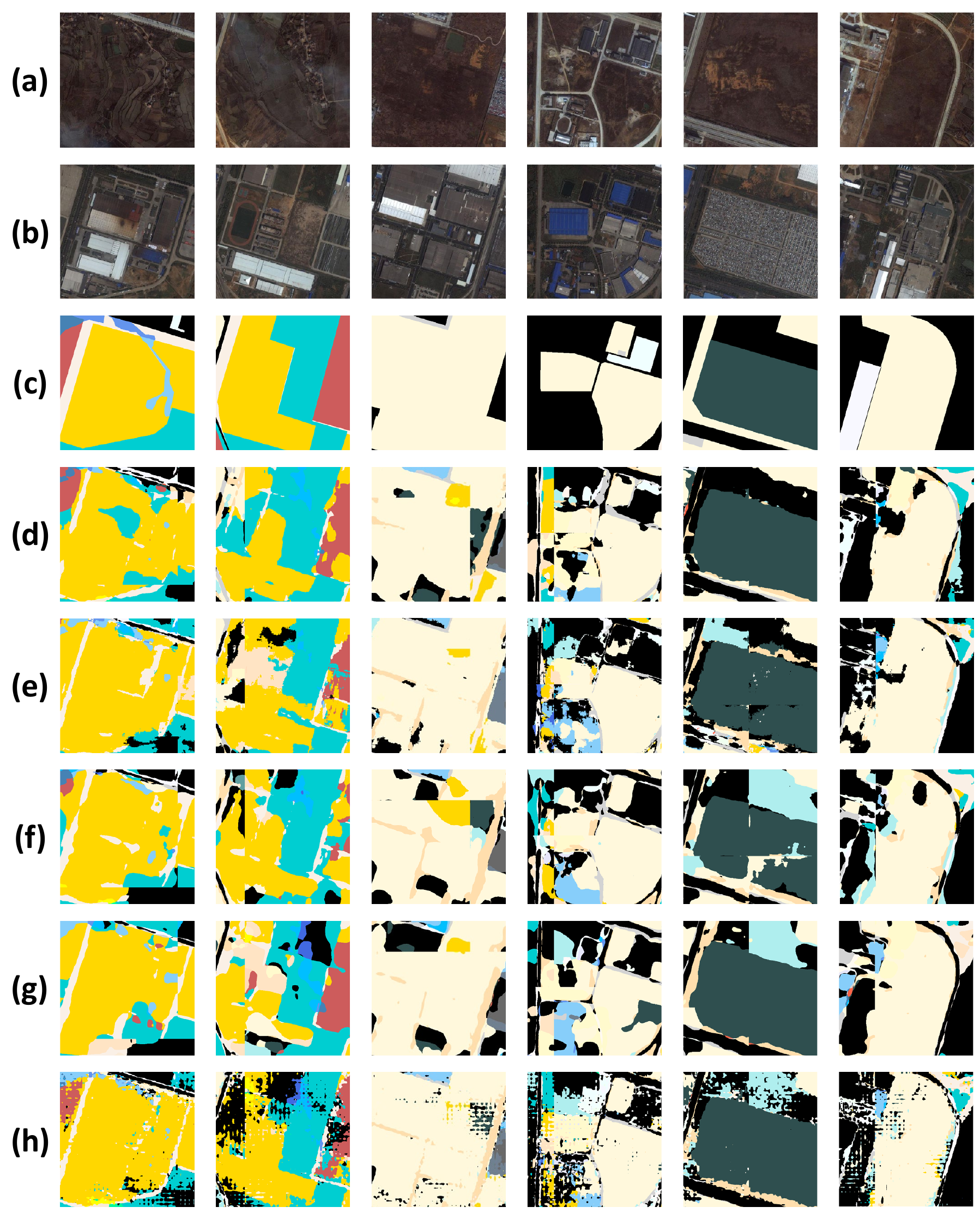}}
	\caption{Visualization results on SCPA-WC dataset. Each color represents a kind of change type. (a) is source image, (b) is destination image and (c) is ground truth. (d)-(h) are results of FCN, FRRN-B, GCN, RefineNet, Deeplabv3 respectively.}
	\label{fig:out}
\end{figure}

To be specific, these methods include FCN \cite{fcn}, FRRN-B \cite{frrnb}, GCN \cite{gcn},  DeepLabv3 \cite{deeplabv3} and RefineNet \cite{refinenet}. FCN is the first one dealing with semantic segmentation task with proposed fully convolutional network. FRRN designs two information streams to combine high resolution feature maps with low resolution ones. GCN demonstrats the importance of large convolutional kernel and utilizes it to improve segmentation performance.  DeepLabv3 adopts astrous spatial pyramid pooling module with global pooling operation to extract high resolution feature maps without adding extra parameter. RefineNet is a generic multi-path refinement network that explicitly exploits all the information along the down-sampling process. It enables high-resolution prediction with long-range residual connections.

\subsection{Experimental Analysis}
Tab. \ref{tab:rlt} presents quantitative results. Fig. \ref{fig:out} demonstrates visualization results of these methods. As Tab. \ref{tab:rlt} shows, these methods can not deal with the SCPA task well. The mean IoU is as low as 8.31\%. For change types (0,5) and (0,6), all methods fail totally. This indicates the simple two-step pipeline based on semantic segmentation only is far from satisfactory. Hence there is much room to create new methods for the challenging SCPA task.

Besides, as we can see in Tab. \ref{tab:rlt}, the binary accuracy metric is high. This means these methods can handle traditional binary change detection task well. On the contrary, SCPA task is really challenging for them. That proves SCPA is a higher-level and more complex task than traditional change detection, which is worth exploration and study.


\section{Future of Semantic Change Pattern Analysis}
Semantic change pattern analysis is a meaningful and challenging task. Current methods are not able to handle the task well. To facilitate the development of the filed, we would like to conclude by discussing some possibilities.

\textbf{Dataset.} Although this work provides the first well-annotated dataset for SCPA task, it only focuses on aerial images. Datasets in other fields, like street view are also expected. These datasets would broaden SCPA task's usage and benefit the evaluation of SCPA models in those fields.

\textbf{Method.} As the experiment shows, two-step method based on semantic segmentation only performs poor on SCPA task. This encourages us to explore other ways. On one hand, for two-step method, we think the main reason why these naive ways fail is they don't take the relation between the source image and destination image into consideration. Hence we could try to introduce spatial-temporal relation to the problem. For instance, status transition probability could be used as a auxiliary information to decide the change type. Similar thought has been studied for binary change detection task in \cite{liu2008using}. On the other hand, we also expect a more unified method, which can handle both source image and destination image simultaneously and give the final SCPA result directly and neatly. We think this is really interesting though challenging. Siamese network might be useful.

\textbf{Application.} Though we mainly focus on aerial images in this work, the application of SCPA should not be limited to this field. It could be applied to general scenes including street view, just like existing usages of change detection in general vision field \cite{eden2008using,pichaikuppan2014change,ulusoy2014image,sakurada2015change,alcantarilla2018street}. Besides, we think visual tracking task and SCPA might benefit each other, because spatial-temporal relation is an important part for both tasks. We hope semantic change pattern analysis task could draw more attention from the community, and invigorate the change detection and related fields.


%
%
\bibliographystyle{splncs04}
\bibliography{segmentation,change_detection}
\end{document}